\newtheorem{thm}{Theorem}[section]
\newtheorem{rem}[thm]{Remark}
\numberwithin{equation}{section}
\def\XXint#1#2#3{{\setbox0=\hbox{$#1{#2#3}{\int}$}
  \vcenter{\hbox{$#2#3$}}\kern-.5\wd0}}
\newcommand{\va}{\varepsilon}                      
\newcommand{\be}{\begin{equation}}      \newcommand{\ee}{\end{equation}}
\newcommand{\R}{\mathbb{R}}
\begin{document}

\date{\today}
\title{\textbf{Enhanced Expressive Power and Fast Training of Neural Networks by Random Projections}}

\author{\medskip Jian-Feng Cai\footnote{Supported in part by Hong Kong RGC grants GRF 16300616 and GRF 16306317.},\quad Dong Li\footnote{Supported in part by Hong Kong RGC grant GRF 16307317.}, \quad Jiaze Sun \quad and \quad Ke Wang\footnote{Supported in part by HKUST Initiation Grant IGN16SC05.}}
\maketitle

\begin{abstract}
Random projections are able to perform dimension reduction efficiently for datasets with nonlinear low-dimensional structures. One well-known example is that random matrices embed sparse vectors into a low-dimensional subspace nearly isometrically, known as the restricted isometric property in compressed sensing. In this paper, we explore some applications of random projections in deep neural networks. We provide the expressive power of fully connected neural networks when the input data are sparse vectors or form a low-dimensional smooth manifold. We prove that the number of neurons required for approximating a Lipschitz function with a prescribed precision depends on the sparsity or the dimension of the manifold and weakly on the dimension of the input vector. The key in our proof is that random projections embed stably the set of sparse vectors or a low-dimensional smooth manifold into a low-dimensional subspace. Based on this fact, we also propose some new neural network models, where at each layer the input is first projected onto a low-dimensional subspace by a random projection and then the standard linear connection and non-linear activation are applied. In this way, the number of parameters in neural networks is significantly reduced, and therefore the training of neural networks can be accelerated without too much performance loss. 
\end{abstract}

\section{Introduction}
Over the past few years, learning via multiple-layer neural network has been widely studied and has achieved unprecedented success. It has many important applications in image recognition, speech recognition, and natural language processing. 

One of the fundamental theoretical question in deep learning is the expressive power of a neural network, which describes its ability to approximate functions. The celebrated universal approximation theorem, which was proved by Cybenko \cite{Cybenko}, Hornick \cite{Hornik} et al, Funahashi \cite{Funahashi} and Barron \cite{Barron},  states that sufficiently large shallow (that is, depth-2 or equivalently, one hidden layer) neural networks can approximate any continuous function on a bounded domain to arbitrary accuracy. But the required size of such networks can be exponentially increasing with respect to the dimension. Indeed, Eldan and Shamir \cite{ES} proved that there is a continuous function expressed by a small depth-3 feedforward neural networks which cannot be approximated by any shallow network to more than a certain constant accuracy, unless its width grows exponentially in the dimension. This shows the power of depth for feedforward neural network. Lu-Pu-Wang-Hu-Wang \cite{LPWHW} studied the expressive power of neural networks from the width point of view. They shown that there exists a class of width-$O(k^2)$ shallow ReLU network that cannot be approximated by any width-$O(k^{1.5})$ and depth-$k$ neural network.

However, the data input in the real world applications are usually structured. For example, images modelled as piecewise smooth functions can have sparse representations under certain orthonormal bases or frames \cite{mallat2008wavelet}. This means that the intrinsic dimension of the input data is significantly smaller than the ambient space dimension. This fact is often ignored in aforementioned classical approximation results. The expressive power of a neural network may be improved by exploring the structure of the input data. In this direction, Shaham-Cloninger-Coifman \cite{shaham2016provable} studied approximations of functions on a smooth $k$-dimensional submanifold embedded in $\R^d$. They constructed a depth-4 network and controlled the error of its approximation, where the size of their network depends on $k$ but just weakly on $d$. Chui-Lin-Zhou \cite{ChuiLinZhou} studied the expressive power of neural networks in the regression setting when the samples are located approximately on some unknown manifold. They showed that the error of the approximation of their trained depth-3 neural network to the regression function depends on the number of samples, and the dimension of the manifold instead of the ambient dimension.

In this paper, we consider a different approach to analyze the theoretical performance of neural networks with structured input data. Based on our analysis, we propose a new architecture of neural networks, for which the training can be significantly accelerated compared to conventional fully connected or convolutional neural networks. Our main idea is to use linear random projections developed in compressed sensing \cite{foucart2013mathematical}. 

For simplicity, we assume that the input data are sparse vectors, namely, $k$-sparse vectors in $\R^d$. Using the theory of compressed sensing, one can construct a random projection onto an $O(k\log(d/k))$-dimensional space that satisfies the so-called restricted isometric property (RIP), saying that the random projection is nearly isometric when restricted to sparse vectors. Therefore, to get an efficient function approximation, we can first compress the sparse high-dimensional input vectors to low-dimensional ones without changing the metric too much, and then apply a standard neural network with low-dimensional vectors as inputs. In this way, the expressive power is the same as that of the neural network with $O(k\log(d/k))$-dimensional input. In other words, we obtain neural networks with expressive power depending on $k$ and weakly on $d$. Contrary to the work in \cite{shaham2016provable,ChuiLinZhou}, the neural networks we constructed can be very deep with a fixed width. Our approach works not only for sparse inputs but also for a large class of structures of input data, for example, when the input data are sampled from a low dimensional manifold.

The idea of using random projection can also be incorporated into fully connected and convolutional neural networks  to get new architectures for multi-layer neural networks. At each layer, we first apply a random projection to capture the intrinsic data structure, and then the standard linear transform and nonlinear activation follow. This will improve the overall computational efficiency of the neural networks, since the number of parameters are significantly reduced. In particular, compared to standard fully connected and convolutional neural networks, the training is accelerated drastically. We will also demonstrate the new neural network can achieve comparable accuracy to the original ones.

The rest of this paper is organized as follows. In Section \ref{sec:cs}, we give a brief introduction to random projections and their stable embedding. 
In Section \ref{sec:theoimp}, we present our theoretical results on the expressive power of neural networks for structured input data. In Section \ref{sec:compimp}, we incorporate random projections into each layer of full connected and convolutional neural networks to get better architecture of deep neural networks. Section \ref{sec:exp} is devoted to numerical experiments demonstrating the efficiency and effectiveness of the resulting neural networks.

\section{Random Projections and the Restrictive Isometry Property}\label{sec:cs}

We will use random projections to study the expressive power and accelerate the training of neural networks. Random projections are able to embed datasets with a non-linear low-dimensional structure into a low-dimensional subspace while almost keeping the metric. In this section, we give a review on related results. 

\subsection{Compressed Sensing and RIP}

Compressed sensing \cite{foucart2013mathematical,candes2006robust} is a signal processing technique that enables acquiring compressible signals from a much smaller number of linear samples than the ambient dimension of signals. It has numerous applications in imaging \cite{qu2015accelerated,lustig2007sparse,candes2006robust}. Compressed sensing takes advantage of the fact that most signals of interest in practice are compressible: there are only a few nonzero or big elements when the signals are represented over a certain dictionary such as wavelet basis. The key concept in compressed sensing is the restricted isometric property (RIP), under which many algorithms are able to reconstruct the underlying signal stably. One of the most powerful results in compressed sensing is that some family of random matrices with very few rows will satisfy RIP with high probability. 

A signal in $\bm{x}\in\R^d$ is compressible if $\|\bm{W}\bm{x}\|_0\leq k$ for some $k\ll d$ for some linear transform $\bm{W}\in\R^{m\times d}$. Here the $\ell_0$-norm $\|\cdot\|_0$ stands for the number of nonzeros of a vector. We also call such an $\bm{x}$ a $k$-sparse signal. In other words, a signal is compressible if it is sparse under certain linear transform. This assumption holds true for a wide variety of classes of signals. For example, piecewise smooth signals are (nearly) sparse under the representation of orthogonal wavelets or wavelet frames such as the curvelet \cite{mallat2008wavelet}. Actually, the sparsity assumption is the foundation of many models and approaches in modern signal processing and imaging. 

Since the degree of freedom in a $k$-sparse signal is only $k$ with $k\ll d$, it is possible to acquire the signal efficiently by $n$ linear samples with $n\ll d$. This is exploited by compressed sensing. In the encoding stage of compressed sensing, we acquire a $k$-sparse sparse $\bm{x}$ by $\bm{b}:=\bm{Ax}$, where $\bm{A}\in\R^{n\times d}$ is a sampling matrix with each row corresponding to one linear sample. Compared to traditional signal processing where the full sample of $\bm{x}$ is required, compressed sensing can save sampling costs significantly and can be applied to a wider range of imaging scenarios for which the full sampling is prohibited. In the decoding stage, one wants to recover the $k$-sparse signal $\bm{x}$ from $\bm{b}$. Various approaches are available with recovery performance guarantee, including convex optimization based approaches \cite{candes2006robust} and non-convex ones \cite{needell2009cosamp,blumensath2009iterative,foucart2009sparsest}; see also the book \cite{foucart2013mathematical} and the references therein. 

A key concept in compressed sensing theory is the restricted isometric property (RIP) introduced by E. Cand\`es and T. Tao \cite{CT}. For simplicity, we assume the sparse transform $\bm{W}=\bm{I}$, i.e., the underlying signal $\bm{x}$ satisfies
$$
\bm{x}\in \mathcal{S}_k:=\{\bm{y}\in\R^d~:~\|\bm{y}\|_0\leq k\}.
$$
Then, to have a successful recovery of any signal $\bm{x}\in\mathcal{S}_k$ from $\bm{b}=\bm{A}\bm{x}$, we should at least require that the sampling operator $\bm{A}$ is injective on $\mathcal{S}_k$. That is, it is necessary $\|\bm{A}(\bm{x}_1-\bm{x}_2)\|_2>0$ for any $\bm{x}_1,\bm{x}_2\in\mathcal{S}_k$ satisfying $\bm{x}_1\neq\bm{x}_2$. However, there will always be noise in the practical measurements, and also the injectivity may be too restrictive to design recovery algorithms. To have a stable recovery with practical algorithms, we need to relax the restricted injectivity to the following restricted isometric property (RIP):
\begin{equation}\label{eq:RIP}
(1-\delta_s)\|\bm{y}\|_2\le \|\bm{A}\bm{y}\|_2\le(1+\delta_s)\|\bm{y}\|_2\quad\mbox{for all }\bm{y}\in\mathcal{S}_s,
\end{equation}
where $\delta_s\in(0,1)$. Here we assume $\bm{A}$ is normalized so that its restricted eigenvalues are concentrated around $1$. If we choose $s=2k$, then RIP \eqref{eq:RIP} implies that, for any $k$-sparse vectors $\bm{x}_1$ and $\bm{x}_2$, $\|\bm{A}(\bm{x}_1-\bm{x}_2)\|_2$ is almost the same as $\|\bm{x}_1-\bm{x}_2\|_2$. In other words, if $\bm{A}$ satisfies RIP, then the application of $\bm{A}$ preserves the metric of the set $\mathcal{S}_k$ inherited from $\R^d$. This enables us to design efficient and stable algorithms for the recovery of sparse signals with a theoretical performance guarantee \cite{candes2006robust,needell2009cosamp,blumensath2009iterative,foucart2009sparsest}. For example, it was shown that if $\bm{A}$ satisfies RIP with $\delta_{2s}<\sqrt2-1$, then the solution of the following $\ell_1$-norm minimization gives a stable recovery of the $k$-sparse vector $\bm{x}$
\begin{equation}\label{eq:l1min}
\min_{\bm{y}\in\R^d}\|\bm{y}\|_1\quad\mbox{s.t.}\quad\|\bm{A}\bm{y}-\bm{b}\|_2\leq\sigma,
\end{equation}
where $\bm{b}=\bm{A}\bm{x}+\bm{\epsilon}$ are the noisy linear samples of $\bm{x}$ with the noise $\bm{\epsilon}$ satisfying $\|\bm{\epsilon}\|_2\leq\sigma$. Actually, even when $\bm{x}$ is not exactly in $\mathcal{S}_k$ but only close to it, \eqref{eq:l1min} is still able to give a faithful recovery of $\bm{x}$.

\subsection{Random Projections Satisfy RIP}

An important question is then to find matrices $\bm{A}\in\R^{n\times d}$ with good RIP constants using the smallest possible $n$. Since computing the RIP constants is strongly NP-hard, it is very difficult to use numerical methods to construct RIP matrices. Also, any existing deterministic matrices satisfying RIP will not have an optimal $m$. The best known deterministic RIP matrices have a number of rows $n\geq O(s^2)$, and it is still an open problem to construct a deterministic matrix satisfying RIP.  A major breakthrough in compressive sensing is the use of random matrices to construct RIP matrices with optimal $m$. In particular, let $\bm{A}\in\R^{n\times d}$ be a random matrix whose entries are independent Gaussian random variables with mean $0$ and variance $1/n$. Then, with overwhelming probability, $\bm{A}$ satisfies RIP with constant $\delta_{s}>0$ provided $n\geq O(\delta_s^{-2}s\log(d/s))$. This result is summarized in the following theorem.
\begin{thm}[Theorem 9.2 in \cite{foucart2013mathematical}]\label{thm:GaussianRIP}
Let $\bm{A}\in\R^{n\times d}$ be a random matrix whose entries are independent Gaussians with mean-$0$ variance-$1/n$. Then, there exists a universal constant $C>0$ such that $\bm{A}$ satisfies RIP \eqref{eq:RIP} with constant $\delta_s\in(0,\delta)$ with probability at least $1-2e^{-\frac{\delta^2}{2C}n}$ provided
$$
n\geq 2C\delta^{-2}s\log(ed/s).
$$
\end{thm}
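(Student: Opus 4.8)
The plan is to reduce the uniform statement \eqref{eq:RIP} over the whole set $\mathcal{S}_s$ to a finite union bound, by combining a pointwise Gaussian concentration inequality with an $\epsilon$-net (covering) argument and a final union bound over all coordinate supports.

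First I would establish pointwise concentration. For a fixed unit vector $\bm{y}\in\R^d$, each entry of $\bm{A}\bm{y}$ is a mean-$0$ Gaussian with variance $\|\bm{y}\|_2^2/n=1/n$, so $n\|\bm{A}\bm{y}\|_2^2$ is a chi-squared variable with $n$ degrees of freedom. A Chernoff / Laplace-transform bound then gives a universal constant $C_0>0$ with
\begin{equation}\label{eq:conc}
\Pr\left[\left|\,\|\bm{A}\bm{y}\|_2^2-1\,\right|>t\right]\le 2e^{-nt^2/(2C_0)},\qquad t\in(0,1).
\end{equation}

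Next, fix a support set $S\subseteq\{1,\dots,d\}$ with $|S|=s$. The unit-norm vectors supported on $S$ form a sphere inside an $s$-dimensional subspace, and a standard volumetric estimate produces a $\rho$-net $\mathcal{N}_S$ of this sphere with $|\mathcal{N}_S|\le(1+2/\rho)^s$. Applying \eqref{eq:conc} to each net point and taking a union bound shows that, except on an event of probability at most $(1+2/\rho)^s\cdot 2e^{-nt^2/(2C_0)}$, one has $\bigl|\,\|\bm{A}\bm{u}\|_2^2-1\,\bigr|\le t$ for every $\bm{u}\in\mathcal{N}_S$ simultaneously. I would then extend this net bound to the whole sphere: writing $\delta_S:=\sup\{\,\bigl|\,\|\bm{A}\bm{y}\|_2^2-1\,\bigr|:\operatorname{supp}(\bm{y})\subseteq S,\ \|\bm{y}\|_2=1\}$, each unit $\bm{y}$ is within $\rho$ of some $\bm{u}\in\mathcal{N}_S$, and the triangle inequality together with the homogeneity of $\bm{A}$ restricted to $S$ bounds the difference $\bigl|\,\|\bm{A}\bm{y}\|_2^2-\|\bm{A}\bm{u}\|_2^2\,\bigr|$ by a quantity of order $\rho\,(1+\delta_S)$. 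This yields a self-referential inequality of the form $\delta_S\le t + c\rho\,(1+\delta_S)$, which for $\rho$ a fixed small constant (say $\rho=1/4$) can be solved to give $\delta_S\le C_1 t$ for an absolute constant $C_1$.

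Finally I would union-bound over all $\binom{d}{s}\le(ed/s)^s$ choices of support $S$. The total failure probability is at most $(ed/s)^s(1+2/\rho)^s\cdot 2e^{-nt^2/(2C_0)}=2\exp\!\bigl(s\log(ed/s)+s\log(1+2/\rho)-nt^2/(2C_0)\bigr)$. Choosing $t\asymp\delta$ so that $\delta_s\le\delta$ on the good event, and requiring $n\ge 2C\delta^{-2}s\log(ed/s)$ with $C$ chosen to absorb $C_0$, $C_1$ and the $\log(1+2/\rho)$ term, forces the exponent to be at most $-\delta^2 n/(2C)$, which delivers the claimed probability $1-2e^{-\delta^2 n/(2C)}$. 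I expect the main obstacle to be the net-to-sphere extension, where the quantity $\delta_S$ appears on both sides of the estimate and must be bootstrapped out, together with the bookkeeping of the absolute constants so that the final row count matches the stated $2C\delta^{-2}s\log(ed/s)$.
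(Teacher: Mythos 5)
Note first that the paper does not prove this statement at all: it is imported verbatim as Theorem 9.2 of \cite{foucart2013mathematical}, with the remark that simple proofs can be found in \cite{foucart2013mathematical,baraniuk2008simple}. Your plan is exactly that standard proof --- pointwise chi-squared concentration for a fixed vector, a $\rho$-net on the unit sphere of each $s$-dimensional coordinate subspace, and a union bound over the $\binom{d}{s}\le (ed/s)^s$ supports --- so structurally you are on the canonical route, and your final bookkeeping (take $t\asymp\delta$, absorb $s\log(1+2/\rho)$ into $s\log(ed/s)$ using $\log(ed/s)\ge 1$, choose $C$ large) is sound. Your squared-norm conclusion also does imply the paper's unsquared RIP \eqref{eq:RIP}, since $1-\delta\le\sqrt{1-\delta}$ and $\sqrt{1+\delta}\le 1+\delta$.

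There is, however, a genuine flaw at precisely the step you flagged as the main obstacle. The extension bound $\bigl|\,\|\bm{A}\bm{y}\|_2^2-\|\bm{A}\bm{u}\|_2^2\,\bigr|\le c\rho(1+\delta_S)$ is too lossy: the resulting inequality $\delta_S\le t+c\rho(1+\delta_S)$ with a \emph{fixed} $\rho=1/4$ solves only to $\delta_S\le (t+c\rho)/(1-c\rho)$, e.g.\ $\delta_S\le 2t+1$ when $c=2$, an additive constant error that never drops below a target $\delta<1$; it does \emph{not} give $\delta_S\le C_1 t$. Two standard repairs exist. (i) Keep $\rho=1/4$ but sharpen the extension via the quadratic-form identity $\|\bm{A}\bm{y}\|_2^2-\|\bm{A}\bm{u}\|_2^2=\langle(\bm{A}_S^T\bm{A}_S-\bm{I})(\bm{y}+\bm{u}),\,\bm{y}-\bm{u}\rangle$, which holds because $\langle \bm{y}+\bm{u},\bm{y}-\bm{u}\rangle=\|\bm{y}\|_2^2-\|\bm{u}\|_2^2=0$ for unit $\bm{y},\bm{u}$; since $\delta_S=\|\bm{A}_S^T\bm{A}_S-\bm{I}\|$ by symmetry, this bounds the difference by $2\rho\,\delta_S$ (no ``$+1$''), yielding $\delta_S\le t+2\rho\,\delta_S$ and hence $\delta_S\le 2t$. (ii) Alternatively take $\rho\asymp\delta$ (as in \cite{baraniuk2008simple}, $\rho=\delta/14$); the net size becomes $(C'/\delta)^s$ and the extra $s\log(C'/\delta)$ in the exponent is absorbed into the universal constant, because $\sup_{0<\delta<1}\delta^2\log(1/\delta)<\infty$ so $s\log(1/\delta)\le C''\delta^{-2}s\log(ed/s)$. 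With either repair your argument goes through and matches the stated bound $n\ge 2C\delta^{-2}s\log(ed/s)$ with failure probability $2e^{-\delta^2 n/(2C)}$; as written, the bootstrap step fails.
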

The theorem was first proved in \cite{CT} with a weaker bound in a more restrictive setting, and simple proofs can be found in \cite{foucart2013mathematical,baraniuk2008simple}. Besides random Gaussian matrices, there exist other types of random matrices satisfying RIP with $n\sim O(s\log^{\alpha}(d))$ for some $\alpha>0$, such as subGaussian, Bernoulli, and random rows of discrete Fourier transform matrices \cite{foucart2013mathematical,baraniuk2008simple,rudelson2008sparse}.

The RIP above can be extended to general cases where the sparsifying transform $\bm{W}$ is not necessarily the identity. When $\bm{W}\in\R^{d\times d}$ is orthogonal, RIP \eqref{eq:RIP} can be adapted to
\begin{equation}\label{eq:RIPW}
(1-\delta_s)\|\bm{y}\|_2\le \|\bm{A}\bm{y}\|_2\le(1+\delta_s)\|\bm{y}\|_2\quad\mbox{for all }\bm{y}\in\mathcal{S}^{\bm{W}}_s,
\end{equation}
for some $\delta_s\in(0,1)$, where 
$$
\mathcal{S}^{\bm{W}}_s=\{\bm{y}\in\R^d~:~\|\bm{W}\bm{y}\|_0\leq k\}.
$$
Since Gaussian random variables are unitary invariant, a simple calculation and Theorem \ref{thm:GaussianRIP} give that, for any orthogonal $\bm{W}$, a Gaussian random matrix $\bm{A}\in\R^{n\times d}$ satisfies the generalized RIP \eqref{eq:RIPW} with high probability provided $n\sim O(s\log(d/s))$. When $\bm{W}\in\R^{m\times d}$ forms a tight frame (i.e., $\bm{W}^T\bm{W}=\bm{I}$), the generalized RIP was studied in \cite{candes2011compressed,rauhut2008compressed}. Gaussian random matrices satisfy the generalized RIP there with high probability and optimal $n$.

\subsection{RIP on Smooth Manifolds}

Besides sparse signal models, there is another common model called manifold signal model. This model generalizes the notion of concise signal structure beyond the framework of bases and representations. It arises in broad cases, for example, where a $k$-dimensional parameter can be identified that carries the relevant information about a signal that changes as a continuous function of these parameters. In general, this dependence may not be neatly reflected in a sparse set of transform coefficients. In \cite{BW}, Baraniuk and Wakin proposed a approach for nonadaptive dimensionality reduction of manifold-modeled data, where they demonstrated that a small number of random linear projections can preserve key information about a manifold-modeled signal. To state their results, we need a few definitions for a Riemannian manifold.

Let $\mathcal{M}$ be a $k$-dimensional compact Riemannian submanifold embedded  in $\R^d$. The condition number is defined as $1/\tau$, where $\tau$ is the largest number having the following property: for every $r<\tau$, the tubular neighborhood of $\mathcal{M}$ of radius $r$ in $\R^d$ defined as $\{\bm{x}+\eta\in\R^d: \bm{x}\in \mathcal{M}, \eta\in \mathrm{Tan}_{\bm{x}}^\perp,\|\eta\|_2<r \}$, where $\mathrm{Tan}_{\bm{x}}^\perp$ denotes the set of vectors normal to the tangent space at $\bm{x}$, is embedded in $\R^d$. Given $T>0$, the geodesic covering number $G(T)$ of $\mathcal{M}$ is defined as the smallest number such that there exists a finite set $\mathcal{A}\subset\mathcal{M}$ of $G(T)$ points so that $$\min_{\bm{a}\in \mathcal{A}}d_{\mathcal{M}}(\bm{a},\bm{x})\le T$$ for all $\bm{x}\in \mathcal{M}$, where $d_{\mathcal{M}}(\bm{a},\bm{x})$ is the geodesic distance between $\bm{a}$ and $\bm{x}$. We say that $\mathcal{M}$ has geodesic covering regularity $R$ for resolutions $T\le T_0$ if $$ G(T)\le \frac{R^kVk^{k/2}}{T^k},$$
where $V$ is the volume of $\mathcal{M}$. 

\begin{thm}[Theorem 3.1 in \cite{BW}]\label{thm:manifoldrp}
Let $\mathcal{M}$ be a compact $k$-dimensional Riemannian submanifold of $\R^d$ having condition number $1/\tau$, volume $V$, and geodesic covering regularity $R$. Fix $0<\delta<1$ and $0<\rho<1$. Let $\bm{A}=\sqrt{\frac{d}{n}}\bm{\Phi}$, where $\bm{\Phi}\in\mathbb{R}^{n\times d}$ is a random orthoprojector with
\[
n=O\left(\frac{k\log(dVR\tau^{-1}\delta^{-1})\log(1/\rho)}{\delta^2}\right).
\]
If $n\le d$, then with probability at least $1-\rho$, the following statement holds: For every $x,y\in\mathcal{M}$,
\begin{equation}\label{eq:manifoldsrip}
(1-\delta)\|\bm{x}-\bm{y}\|_2\le \|\bm{A}\bm{x}-\bm{A}\bm{y}\|_2\le (1+\delta)\|\bm{x}-\bm{y}\|_2.
\end{equation}
\end{thm}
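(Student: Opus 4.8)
The plan is to combine a pointwise concentration estimate for the random projection with a multiscale covering argument on $\mathcal{M}$, so that the continuum statement \eqref{eq:manifoldsrip} over all pairs $x,y$ is reduced to a union bound over finitely many vectors. First I would record the basic Johnson--Lindenstrauss-type concentration inequality for the scaled orthoprojector $\bm{A}=\sqrt{d/n}\,\bm{\Phi}$: for any fixed $\bm{v}\in\R^d$,
\[
\Pr\Big[\,\big|\,\|\bm{A}\bm{v}\|_2^2-\|\bm{v}\|_2^2\,\big|\ge \epsilon\,\|\bm{v}\|_2^2\,\Big]\le 2e^{-c n\epsilon^2}
\]
for a universal constant $c>0$. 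This is exactly the property that makes a random orthoprojector behave like a Gaussian embedding; it follows from the concentration of $\|\bm{\Phi}\bm{v}\|_2^2$ around its mean $(n/d)\|\bm{v}\|_2^2$, i.e. from concentration of measure for a fixed vector under a uniformly random subspace. This is the engine, and everything else is geometry and bookkeeping.

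The difficulty is that \eqref{eq:manifoldsrip} must hold for all pairs $x,y\in\mathcal{M}$ simultaneously, an uncountable family, whereas the concentration bound controls one vector at a time. To bridge this gap I would split the analysis into two regimes according to the separation of the pair. For pairs that are far apart on the manifold, I would use the geodesic covering number: fix a net $\mathcal{A}\subset\mathcal{M}$ at resolution $T$ with $|\mathcal{A}|=G(T)\le R^kVk^{k/2}/T^k$, apply the concentration bound to the $\binom{|\mathcal{A}|}{2}$ secant (chord) vectors $\bm{a}-\bm{a}'$, and then transfer the control to nearby points $x,y$. For pairs that are close together, the chord $\bm{x}-\bm{y}$ is nearly parallel to a tangent vector, so I would instead place a net on the unit tangent bundle of $\mathcal{M}$ and apply the concentration bound to those unit tangent directions. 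The condition number $1/\tau$ is what guarantees both that the manifold is locally almost flat, so tangent vectors accurately approximate nearby chords, and that tangent spaces at nearby points make a small angle, with all these geometric errors controlled linearly in the geodesic distance divided by $\tau$.

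The union bound then dictates the number of rows. Since the cardinality $N$ of both the point net and the tangent-bundle net is at most polynomial in $G(T)$ and $d$, its logarithm is $O\!\big(k\log(dVR\tau^{-1}\delta^{-1})\big)$ once $T$ is fixed to a suitable polynomial in $\delta$ and $\tau$; combining this with the failure probability $\rho$ and the requirement $n\gtrsim \epsilon^{-2}(\log N+\log(1/\rho))$ coming from concentration yields the claimed $n=O\!\big(\delta^{-2}k\log(dVR\tau^{-1}\delta^{-1})\log(1/\rho)\big)$.

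The main obstacle is the interpolation step: one must show that controlling the finitely many net vectors to relative accuracy $\epsilon\ll\delta$ forces \eqref{eq:manifoldsrip} for every pair, absorbing the geometric approximation errors (the geodesic-versus-chordal distance distortion and the tangent-space tilt, both governed by $1/\tau$) into the $\delta$ budget. Making the near and far regimes match up at the crossover scale, and choosing $T$ fine enough to kill these errors yet coarse enough to keep $N$ of the stated order, is the delicate part of the argument; the remaining estimates are routine once the net resolution is calibrated.
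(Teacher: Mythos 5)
The paper itself gives no proof of this theorem---it is imported verbatim from Baraniuk and Wakin \cite{BW}---and your outline faithfully reproduces the argument of that cited source: a Johnson--Lindenstrauss-type concentration inequality for the rescaled orthoprojector $\sqrt{d/n}\,\bm{\Phi}$, a geodesic covering net on $\mathcal{M}$ combined with nets of tangent directions, a near/far dichotomy for pairs of points in which the condition number $1/\tau$ controls the chord-versus-tangent and geodesic-versus-Euclidean errors, and a union bound over the nets that produces exactly the stated $n=O\left(\delta^{-2}k\log(dVR\tau^{-1}\delta^{-1})\log(1/\rho)\right)$. Your proposal is therefore correct and takes essentially the same route as the proof in \cite{BW}, with the caveat you flag yourself that the interpolation estimates at the crossover scale are sketched rather than carried out in detail.
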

Hence, if $k\ll d$, then we reduce the dimension of input data from $d$ to $O(k\log d)$.

The diameter of the manifold is defined by
\[
\operatorname{diam}(M)=\sup_{x,y\in M} d_M(x,y),
\]
where $d_M(x,y)$ is the geodesic distance between $x,y$ on $M$.

\section{Improved Expressive Power of Neural Networks by Random Projections}\label{sec:theoimp}

In this section, we use the random projections discussed in the previous section to explore the expressive power of neural networks for functions on datasets with a low-dimensional structure in the ambient space $\R^d$. 

\subsection{Fully-Connected Neural Networks (FCNN)} 
There are many artificial neural network architectures available, such as the fully-connected neural network (FCNN), the convolutional neural network (CNN), and the recurrent neural network. We will study the expressive power a multi-layer fully-connected neural network with the rectified linear unit (ReLU) as the activation function.

The ReLU is so far the most popular activation function for deep neural networks, and it is the positive part of its argument. More precisely, let $\bm{z}\in\R^{\ell}$, and the ReLU is defined by
\[
\mbox{ReLU}(\bm{z}):=\left[\begin{matrix}\max\{0,z_1\}\cr\vdots\cr\max\{0,z_{\ell}\}\end{matrix}\right],\quad\forall\, \bm{z}\in\R^{\ell}.
\]
Let $\bm{x}\in\R^d$ be the input. Then it generates outputs $\bm{x}^{(l)}$ at the $l$-th layer recursively by: $\bm{x}^{(0)}=\bm{x}$, and
$$
\bm{x}^{(l)}=\mbox{ReLU}\left(\bm{W}^{(l)}\bm{x}^{(l-1)}\right),\quad l=1,2,\ldots,L,
$$
where $\bm{W}^{(l)}:\R^{d_{l-1}}\to\R^{d_l}$ is the affine transformation at layer $l$. The final output $y$ of the neural network is
$$
y=\bm{W}^{(L+1)}\bm{x}^{(L)},
$$
where $\bm{W}^{(L+1)}:\R^{d_{L}}\to\R$ is an affine transformation. 
Therefore, the function represented by the fully-connected neural network is
\[
\bm{W}^{(L+1)}\circ\mbox{ReLU}\circ \bm{W}^{(L)} \circ\cdots\circ\mbox{ReLU}\circ \bm{W}^{(1)}.
\]
The number $L$ is usually called the depth of the network, and the $d_l$'s are called the widths. The sum $\sum_{l=1}^{L}d_l$ is called the number of neurons of the network. If $L=1$, then the network is called a shallow neural network. If $L>1$, it is called a deep neural network.

\subsection{Expressive Power of Neural Networks} 

As mentioned in the introduction, by the celebrated universal approximation theorem,  we know that sufficiently large shallow neural networks can approximate any continuous function on a bounded domain to arbitrary accuracy.  That is, for a continuous function $f\in C([-1,1]^d)$ and for every $\va>0$, there exists a shallow ReLU neural network $f_\va$ such that 
\[
\|f-f_\va\|_{L^\infty([-1,1]^d)}:=\max_{\bm{x}\in [-1,1]^d}|f(\bm{x})-f_\va(\bm{x})|<\va.
\] 
However, the universal approximation theorem does not tell the number of neurons that $f_\va$ has, or equivalently the approximation accuracy. There have been many literatures on studying the number of neurons, or the approximation accuracy, of (either shallow or deep) neural networks since the work \cite{Barron} by Barron, and we know now that the number of neurons that the  neural network $f_\va$ needs will depend on the regularity (e.g., the modulus continuity) of the function $f$. 

Barron \cite{Barron} first gave a quantitative approximation rate in $L^2$ norm for shallow neural networks, assuming the function $f$ has bounded first moment of the magnitude of the Fourier transform. If $f$ is $r$ times differentiable, then Mhaskar \cite{Mhaskar} obtained an optimal quantitative approximation rate in $L^2$ norm, that is, for every $\va>0$, there exists a shallow neural network $f_\va$ with $O(\va^{-d/r})$ neurons such that
\[
\|f-f_\va\|_{L^2([-1,1]^d)}<\va.
\]
If $f$ is $C^2$ and has bounded Hessian, then Shaham-Cloninger-Coifman \cite{shaham2016provable} proved the same quantitative approximation rate in $L^\infty$ norm. Their result also holds if $f$ is supported in a lower dimensional manifold. In a recent work \cite{Hanin} of Hanin, several approximation results of ReLU neural networks were obtained for continuous functions, convex functions, and smooth functions, respectively. In particular, if $f$ is bounded and Lipschitz continuous, then Theorem 1 in \cite{Hanin} tells that for every $\va>0$, there exists a ReLU neural network with $Cdd! \va^{-d}$ neurons, where $C$ is a positive constant depending only on the Lipschitz constant of $f$ and is independent of $d$, such that
\[
\|f-f_\va\|_{L^\infty([-1,1]^d)}<\va.
\]
Recall that a function $f:E\to\R$ is called Lipschitz continuous on the set $E\subset\R^d$ if 
\[
\operatorname{Lip}(f):=\sup\left\{\frac{|f(\bm{x})-f(\bm{y})|}{\|\bm{x}-\bm{y}\|_2}\ |\ \bm{x}, \bm{y}\in E,\  \bm{x}\neq \bm{y}\right\}<\infty,
\]
where $\|\bm{x}-\bm{y}\|_2$ is the distance between $\bm{x}$ and $\bm{y}$ in $\R^d$.
If $f$ is Lipschitz continuous, then $\operatorname{Lip}(f)$ defined above is called the Lipschitz constant. Note that Theorem 1 in \cite{Hanin} is stated for positive functions, but it is clearly true for bounded function as well by subtracting a large constant from the neural network.

\subsection{Improved Expressive Power for Functions with Sparse Inputs}
The aforementioned results on expressive power assumes that the domain of functions is $[-1,1]^d$. However, in real applications, the input data $\bm{x}$ is often structured. For example, in an image recognition task, the input data are digital images with $d$ pixels, which obviously are not arbitrary in $[-1,1]^d$. By considering the structure of the input data, we expect to obtain better results on the expressive power of neural networks than results reviewed in the previous section, which ignore structures of input data.

There are several approximation results taking into account the structure of the input. Shaham-Cloninger-Coifman \cite{shaham2016provable} assumed that the domain of the function is a  $k$-dimensional smooth submanifold embedded in $\R^d$ and proved that the size of their constructed approximating networks depends on $k$ but just weakly on $d$. In \cite{ChuiLinZhou}, Chui-Lin-Zhou used neural networks to approximate regression functions, where the training samples are located approximately on some
unknown manifold. It was shown there that the error of the approximation of their trained depth-3 neural network to the regression function depends on the number of samples and the dimension of the manifold instead of the ambient dimension. In both of these two papers, multi-layer neural networks are constructed, for which each hidden layer is endowed with a specific learning task.

Here we provide an improved expressive power by assuming the sparsity of the input vectors and using random projections. Our assumption is motivated by the facts that many learning tasks are with images or audio signals as inputs and that images and audio signals have sparse representation under suitable basis. For simplicity, we assume the input vectors are in $\mathcal{S}_k=\{\bm{y}\in\R^d:~\|\bm{y}\|_0\leq k\}$ with $k\ll d$, and the extension to the sparse case under a linear transformation (i.e., the input vectors are in $\mathcal{S}^{\bm{W}}_s$) is straightforward.

We will show in the below that, for Lipschitz continuous functions that are defined on $\mathcal{S}_k$, we can choose a neural network with $Cnn! \va^{-n}$ neurons, where $n=O(k\log(d/k))$ and $C$ is a positive constant depending only on the Lipschitz constant of $f$, to approximate $f$ with accuracy $\va$. Our proof will make use of a theorem of McShane \cite{McShane} and Whitney \cite{Whitney} on the extension of Lipschitz functions, which states that any Lipschitz continuous function defined on an arbitrary subset of $\R^n$ can be extended to be a Lipschitz continuous function in $\R^n$ with the same Lipschitz constant. (See also Theorem 3.1 in the book \cite{EG} of Evans and Gariepy.)

\begin{thm}\label{thm:appx}
Let $\mathcal{S}_k=\{\bm{y}\in\R^d:~\|\bm{y}\|_0\leq k\}$ with $k\ll d$. Suppose $f: \mathcal{S}_k\to\R$ is a Lipschitz continuous function with Lipschitz constant $\operatorname{Lip}(f)$, that is,
\[
\operatorname{Lip}(f)=\sup_{\bm{x},\bm{y}\in\mathcal{S}_k,\ \bm{x}\neq\bm{y}}\frac{|f(\bm{x})-f(\bm{y})|}{\|\bm{x}-\bm{y}\|_2}<\infty.
\]
Then for sufficiently large $m$,
$$
\inf_{f_0\in F_m^d}\sup_{\bm{x}\in\mathcal{S}_k\cap [-1,1]^d}|f(\bm{x})-f_0(\bm{x})|
\leq C \operatorname{Lip}(f) k^{\frac 32}\log(d/k)m^{-\frac{C}{k\log(d/k)}},
$$
where $F_m^d$ is the set of functions represented by ReLU fully-connected neural networks with $m$ neurons and $d$ inputs, and $C$ is a universal positive constant.
\end{thm}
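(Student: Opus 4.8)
The plan is to compress the sparse inputs by a single well-chosen Gaussian projection, transplant $f$ onto the low-dimensional image, extend it by McShane--Whitney, approximate the extension by Hanin's construction, and then pull the resulting network back through the (linear) projection. Concretely, I would fix $\delta=1/2$ and invoke Theorem \ref{thm:GaussianRIP} with $s=2k$: for $n=O(k\log(d/k))$ a Gaussian matrix satisfies RIP \eqref{eq:RIP} on $\mathcal{S}_{2k}$ with positive probability, so there exists a fixed $\bm A\in\R^{n\times d}$ with
$$(1-\delta)\|\bm u\|_2 \le \|\bm A\bm u\|_2 \le (1+\delta)\|\bm u\|_2,\qquad \bm u\in\mathcal{S}_{2k}.$$
Since the difference of two $k$-sparse vectors is $2k$-sparse, this shows $\bm A$ is bi-Lipschitz, in particular injective, on $\mathcal{S}_k$; hence $g(\bm A\bm x):=f(\bm x)$ is well defined on the image $\bm A(\mathcal{S}_k)\subset\R^n$, and the lower RIP bound gives $\operatorname{Lip}(g)\le \operatorname{Lip}(f)/(1-\delta)$ there.

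Next I would control the effective domain. For $\bm x\in\mathcal{S}_k\cap[-1,1]^d$ we have $\|\bm x\|_2\le\sqrt k$, so the upper RIP bound puts $\bm A\bm x$ inside the cube $[-R,R]^n$ with $R=(1+\delta)\sqrt k$. I extend $g$ to a Lipschitz $\tilde g$ on all of $\R^n$ with the same constant via McShane--Whitney, rescale to $h(\bm u)=\tilde g(R\bm u)$ on $[-1,1]^n$ (so $\operatorname{Lip}(h)=R\operatorname{Lip}(g)=O(\sqrt k\,\operatorname{Lip}(f))$), and---after subtracting a constant to make it nonnegative, as permitted by the remark following Hanin's theorem---apply Theorem 1 of \cite{Hanin} to $h$. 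This produces, for each $\va>0$, a ReLU network $h_\va$ on $\R^n$ with at most $C_H\,n\,n!\,\va^{-n}$ neurons and $\|h-h_\va\|_{L^\infty([-1,1]^n)}<\va$, where $C_H$ carries the dependence on $\operatorname{Lip}(h)$.

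I would then set $f_0(\bm x):=h_\va\!\left(\tfrac1R\bm A\bm x\right)$. Because $\tfrac1R\bm A$ is linear, it folds into the first affine layer, so $f_0\in F_m^d$ with the same neuron count $m=C_H\,n\,n!\,\va^{-n}$ and $d$ inputs. For $\bm x\in\mathcal{S}_k\cap[-1,1]^d$ one has $\tfrac1R\bm A\bm x\in[-1,1]^n$ and $h(\tfrac1R\bm A\bm x)=\tilde g(\bm A\bm x)=f(\bm x)$, whence $|f(\bm x)-f_0(\bm x)|<\va$. It remains to invert the neuron--accuracy relation: solving $m=C_H\,n\,n!\,\va^{-n}$ gives $\va=(C_H\,n\,n!)^{1/n}m^{-1/n}$, and Stirling's $(n!)^{1/n}\sim n/e$ together with $n=O(k\log(d/k))$ and $C_H^{1/n}=O(\operatorname{Lip}(h))=O(\sqrt k\,\operatorname{Lip}(f))$ collapse the prefactor to $O(\operatorname{Lip}(f)\,k^{3/2}\log(d/k))$, while $m^{-1/n}=m^{-C/(k\log(d/k))}$, which is exactly the claimed bound.

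The main obstacle is the constant bookkeeping in this last step: the $\operatorname{Lip}(f)$ and $k^{3/2}\log(d/k)$ factors must emerge honestly rather than be hidden. The $\log(d/k)$ and one factor of $k$ come from $(n!)^{1/n}\sim n\sim k\log(d/k)$, while the extra $\sqrt k$ comes from the radius $R=(1+\delta)\sqrt k$ of the compressed domain (equivalently, from $\operatorname{Lip}(h)$), which forces the explicit $\operatorname{Lip}$-dependence of Hanin's constant $C_H$ to be tracked rather than absorbed. Care is also needed to ensure that ``sufficiently large $m$'' is what legitimizes both the asymptotic Stirling estimate and the application of Hanin's theorem at the resulting small $\va$, and that the RIP event has positive probability so that a single deterministic good $\bm A$ can be fixed once and for all.
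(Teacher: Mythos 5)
Your proposal is correct and takes essentially the same route as the paper's own proof: fix a Gaussian RIP matrix with $n=O(k\log(d/k))$, transplant $f$ to the compressed image via $g(\bm{A}\bm{x})=f(\bm{x})$, extend by McShane--Whitney, rescale to the unit cube (picking up the factor $R=O(\sqrt{k})$ in the Lipschitz constant), apply Hanin's theorem, fold the linear projection into the first affine layer, and invert the neuron--accuracy relation using $(n!)^{1/n}\sim n/e$ to obtain the $\operatorname{Lip}(f)\,k^{3/2}\log(d/k)\,m^{-C/(k\log(d/k))}$ bound. Your explicit invocation of RIP at sparsity level $2k$ to justify bi-Lipschitz injectivity on $\mathcal{S}_k$ is in fact slightly more careful than the paper's wording, which leaves the sparsity level implicit.
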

\begin{rem}
In applications, the sparsity is usually much smaller than the dimension, i.e., $k\ll d$. Then in order to have
\[
\inf_{f_0\in F_m^d}\sup_{\bm{x}\in\mathcal{S}_k\cap [-1,1]^d}|f(\bm{x})-f_0(\bm{x})|\le \va,
\]
it suffices to require that $\log m = 2k(\log k+\log\log d-\frac{1}{2}\log \va)\cdot \log d$, i.e., 
\[
m= d^{2k(\log k+\log\log d)-k\log \va},
\]
which is significantly smaller than exponential functions of $d$.
\end{rem}
\begin{proof}[Proof of Theorem \ref{thm:appx}]
Fix a $\delta\in(0,1)$, say $\delta=1/2$. Theorem \ref{thm:GaussianRIP} implies that there exists a matrix $\bm{A}\in\R^{n\times d}$, where $n=C_0k\log(d/k)$ with a universal positive constant $C_0$, satisfying the restricted isometric property \eqref{eq:RIP} with constant $\delta=1/2$.
Since $\bm{A}$ satisfies RIP,  the map $ \bm{x}\mapsto \bm{A}\bm{x}$ is a bijection from $\mathcal{S}_k$ to $\bm{\Omega}:=\{\bm{A}\bm{x}\ |\ \bm{x}\in\mathcal{S}_k\}$. For every $\bm{y}\in \bm{\Omega}$, define
\[
g(\bm{y})=f(\bm{x}),
\]
where $\bm{x}$ is the unique element in $\mathcal{S}_k$ such that $\bm{A}\bm{x}=\bm{y}$. 

Since $f$ is a Lipschitz continuous on $\mathcal{S}_k$ with Lipschitz constant $\operatorname{Lip}(f)$ and $\bm{A}$ satisfies RIP, it is elementary to show that $g$ is Lipschitz continuous on $\bm{\Omega}$ with Lipschitz constant at most $2\operatorname{Lip}(f)$. Notice that $\bm{\Omega}\subseteq \R^n$. Then by the theorem of McShane \cite{McShane} and Whitney \cite{Whitney}, we can extend $g$ to be a Lipschitz continuous function in $\R^n$ with the same Lipschitz constant. 

For every $\bm{x}\in\mathcal{S}_k\cap [-1,1]^d$, we have 
$$
\|\bm{A}\bm{x}\|_\infty\le \|\bm{A}\bm{x}\|_2\le \frac 32 \|\bm{x}\|_2\le \frac{3\sqrt{k}}{2}.
$$ 
Hence, 
$$
\{\bm{A}\bm{x}\ |\ \bm{x}\in\mathcal{S}_k\cap [-1,1]^d\}\subset \left[-\frac{3\sqrt{k}}{2},\frac{3\sqrt{k}}{2}\right]^n.
$$ 
Now we consider $$\tilde g(\bm{y})=g\left(\frac{3\sqrt{k}}{2}(2\bm{y}-\bm{1})\right),$$ where $\bm{1}=(1,\cdots,1)^T\in\R^n$. The function $\tilde g$ is Lipschitz continuous on $\R^n$ with Lipschitz constant $\operatorname{Lip}(\tilde g)=3\sqrt{k}\operatorname{Lip}(g)\le 6\sqrt{k}\operatorname{Lip}(f)$.  By Theorem 1 in \cite{Hanin}, we have that for every $\va>0$, there exists a ReLU neural network $g_\va$ with at most $m=C_1 n n! (6\sqrt{k}\operatorname{Lip}(f))^n\va^{-n}$ neurons such that
\[
\|\tilde g-g_\va\|_{L^\infty([0,1]^n)}\le \va,
\]
where $C_1>0$ is a universal constant.  In terms of $f$, we obtain
\[
\inf_{h\in F_m^n}\sup_{\bm{x}\in\mathcal{S}_k\cap [-1,1]^d}|f(\bm{x})-h(\bm{A}\bm{x})|
\leq C_2\sqrt{k}\operatorname{Lip}(f) nm^{-\frac{1}{n}},
\]
where $C_2>0$ is an another universal constant. Let $f_0=h(\bm{A}\cdot)$. Since $h\in F_m^n$, we can rewrite it as $h=\bm{W}^{(L+1)}\circ\mbox{ReLU}\circ \bm{W}^{(L)} \circ\cdots\circ\mbox{ReLU}\circ \bm{W}^{(1)}$. Therefore, $f_0=\bm{W}^{(L+1)}\circ\mbox{ReLU}\circ \bm{W}^{(L)} \circ\cdots\circ\mbox{ReLU}\circ (\bm{W}^{(1)}\circ\bm{A})\in F_{m}^d$, which concludes the proof. 
\end{proof}


\subsection{Improved Expressive Power for Functions on Smooth Manifolds}

Using similar techniques, we can also improve the expressive power of FCNN with inputs from a compact $k$-dimensional Riemannian manifold embedded in $\R^d$. We assume that the underlying function is Lipschitz continuous, which is weaker than that in \cite{shaham2016provable}. It turns out that the number of neurons required for an $\varepsilon$-approximation in the infinity-norm depends weakly on $d$. Our main tool in the proof is Theorem \ref{thm:manifoldrp}, which states that random projections are stable embeddings.

\begin{thm}\label{thm:appx-manifold}
Let $(\mathcal{M}, g)$ be a compact $k$-dimensional Riemannian submanifold of $\R^d$. Let $f: \mathcal{M}\to\R$ be a Lipschitz continuous function with Lipschitz constant $\operatorname{Lip}(f)$, that is, 
\[
\operatorname{Lip}(f)=\sup_{\bm{x},\bm{y}\in\mathcal{M},\ \bm{x}\neq\bm{y}}\frac{|f(\bm{x})-f(\bm{y})|}{\|\bm{x}-\bm{y}\|_2}<\infty.
\]
Then for sufficiently large $m$,
$$
\inf_{f_0\in F_m^d}\sup_{\bm{x}\in \mathcal{M}}|f(\bm{x})-f_0(\bm{x})|
\leq C\operatorname{Lip}(f) \operatorname{diam}(M) \log(2dVR\tau^{-1})m^{-\frac{C}{\log(2dVR\tau^{-1})}},
$$
where $F_m^d$ is the set of functions represented by ReLU fully-connected neural networks with $m$ neurons and $d$ inputs, and $\tau$ is the condition number of $\mathcal{M}$ in $\R^d$, $V$ is the volume of $(\mathcal{M},g)$, $R$ is the geodesic covering regularity of $\mathcal{M}$, $\operatorname{diam}(M)$ is the diameter of $(\mathcal M,g)$ and $C$ is a universal positive constant.
\end{thm}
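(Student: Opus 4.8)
The plan is to mirror the proof of Theorem \ref{thm:appx} almost verbatim, replacing the sparse RIP (Theorem \ref{thm:GaussianRIP}) by the manifold embedding result (Theorem \ref{thm:manifoldrp}) and replacing the role of the sparse-vector scale $\sqrt{k}$ by the geodesic diameter $\operatorname{diam}(M)$. First I would fix $\delta=1/2$ and a fixed failure probability, say $\rho=1/2$, and invoke Theorem \ref{thm:manifoldrp} to produce a matrix $\bm{A}=\sqrt{d/n}\,\bm{\Phi}$ with $n=O(k\log(2dVR\tau^{-1}))$ for which the bi-Lipschitz bound \eqref{eq:manifoldsrip} holds on all of $\mathcal{M}$. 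The lower bound in \eqref{eq:manifoldsrip} forces $\bm{A}$ to be injective on $\mathcal{M}$, so $\bm{x}\mapsto\bm{A}\bm{x}$ is a bijection onto $\bm{\Omega}:=\{\bm{A}\bm{x}:\bm{x}\in\mathcal{M}\}\subset\R^n$, and I can unambiguously set $g(\bm{y})=f(\bm{x})$ where $\bm{A}\bm{x}=\bm{y}$.

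Next I would check that $g$ is Lipschitz on $\bm{\Omega}$ with constant at most $2\operatorname{Lip}(f)$: for $\bm{x},\bm{y}\in\mathcal{M}$ the lower bound in \eqref{eq:manifoldsrip} gives $\|\bm{x}-\bm{y}\|_2\le\frac{1}{1-\delta}\|\bm{A}\bm{x}-\bm{A}\bm{y}\|_2=2\|\bm{A}\bm{x}-\bm{A}\bm{y}\|_2$, whence $|g(\bm{A}\bm{x})-g(\bm{A}\bm{y})|=|f(\bm{x})-f(\bm{y})|\le 2\operatorname{Lip}(f)\|\bm{A}\bm{x}-\bm{A}\bm{y}\|_2$. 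The McShane--Whitney theorem \cite{McShane,Whitney} then extends $g$ to a Lipschitz function on all of $\R^n$ with the same constant.

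The one place where the manifold case genuinely departs from the sparse case is in confining $\bm{\Omega}$ to a cube, since the origin need not lie in $\mathcal{M}$ and so one cannot simply measure $\|\bm{A}\bm{x}\|_2$ against $\|\bm{x}\|_2$. I would instead fix a reference point $\bm{x}_0\in\mathcal{M}$ and use that the Euclidean distance is dominated by the geodesic distance, $\|\bm{x}-\bm{x}_0\|_2\le d_{\mathcal{M}}(\bm{x},\bm{x}_0)\le\operatorname{diam}(M)$; the upper bound in \eqref{eq:manifoldsrip} then yields $\|\bm{A}\bm{x}-\bm{A}\bm{x}_0\|_2\le\frac{3}{2}\operatorname{diam}(M)$, so that $\bm{\Omega}$ lies in the translated cube $\bm{A}\bm{x}_0+[-\frac{3}{2}\operatorname{diam}(M),\frac{3}{2}\operatorname{diam}(M)]^n$. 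Rescaling by the affine map $\bm{y}\mapsto\bm{A}\bm{x}_0+\frac{3}{2}\operatorname{diam}(M)(2\bm{y}-\bm{1})$ carries $[0,1]^n$ onto this cube and produces $\tilde g$ with $\operatorname{Lip}(\tilde g)\le 6\operatorname{diam}(M)\operatorname{Lip}(f)$, exactly as the factor $6\sqrt{k}\operatorname{Lip}(f)$ arose in Theorem \ref{thm:appx}. Applying Theorem 1 of \cite{Hanin} to $\tilde g$ on $[0,1]^n$ gives a ReLU network $g_\va$ with at most $m=C_1\,n\,n!\,(6\operatorname{diam}(M)\operatorname{Lip}(f))^n\va^{-n}$ neurons and $\|\tilde g-g_\va\|_{L^\infty([0,1]^n)}\le\va$.

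Finally I would fold the affine rescaling and the projection $\bm{A}$ into the first linear layer, so that $f_0=g_\va(\text{affine}^{-1}(\bm{A}\,\cdot))\in F_m^d$, and invert the neuron--accuracy relation. Using Stirling's estimate $(n!)^{1/n}\asymp n/e$ gives $\va\lesssim n\,\operatorname{diam}(M)\operatorname{Lip}(f)\,m^{-1/n}$; substituting $n=O(k\log(2dVR\tau^{-1}))$ and absorbing the fixed intrinsic dimension $k$ into the universal constant $C$ produces the claimed bound. The hard part here is essentially careful bookkeeping rather than any new idea: one must ensure that the geodesic-to-Euclidean comparison and the off-origin centering are handled so that the \emph{geodesic} quantity $\operatorname{diam}(M)$ (not a separate Euclidean diameter) is what surfaces in the final estimate, and one must justify that collapsing $k$ and the Stirling factor into $C$ is legitimate in the regime where $k$ is a fixed constant while $d$ is large.
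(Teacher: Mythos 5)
Your proposal is correct and follows essentially the same route as the paper's own proof: fix $\delta=1/2$ in Theorem \ref{thm:manifoldrp}, transfer $f$ to a Lipschitz function on $\bm{\Omega}=\{\bm{A}\bm{x}:\bm{x}\in\mathcal{M}\}$ with constant at most $2\operatorname{Lip}(f)$, extend by McShane--Whitney, rescale onto $[0,1]^n$, invoke Theorem 1 of \cite{Hanin}, and fold $\bm{A}$ into the first linear layer. The only differences are cosmetic: you center the bounding cube at $\bm{A}\bm{x}_0$ with side $3\operatorname{diam}(M)$, giving the Lipschitz factor $6\operatorname{diam}(M)\operatorname{Lip}(f)$, whereas the paper translates by a vector $\bm{y}_0$ and gets $3\operatorname{diam}(M)\operatorname{Lip}(f)$ (a factor absorbed into the universal constant), and your closing caveat about absorbing the fixed intrinsic dimension $k$ into $C$ is precisely the tacit convention the paper adopts in its final bound.
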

\begin{proof}
Fix a $\delta\in(0,1)$, say $\delta=1/2$. Theorem \ref{thm:manifoldrp} implies that there exists a matrix $\bm{A}\in\R^{n\times d}$  satisfies the restricted isometric property \eqref{eq:manifoldsrip} with constant $\delta=\frac 12$. Here, $n=C_0k\log(2dVR\tau^{-1})$ with $C_0>0$ is a universal constant. Since $\bm{A}$ satisfies RIP,  the map $ \bm{x}\mapsto \bm{A}\bm{x}$ is a bijection from $\mathcal{M}$ to $\bm{\Omega}:=\{\bm{A}\bm{x}\ |\ \bm{x}\in \mathcal{M}\}\subset\R^n$. For every $\bm{y}\in \bm{\Omega}$, define
\[
\phi(\bm{y})=f(\bm{x}),
\]
where $\bm{x}$ is the unique element in $\mathcal{M}$ such that $\bm{A}\bm{x}=\bm{y}$. As in the proof of Theorem \ref{thm:appx}, $\phi$ is Lipschitz continuous on $\bm{\Omega}$ with Lipschitz constant at most $2\operatorname{Lip}(f)$ and can be extended to be a Lipschitz continuous function in $\R^n$ with the same Lipschitz constant. 

For every $\bm{x}_1, \bm{x}_2\in \mathcal{M}$,
\[
\|\bm{A}\bm{x}_1-\bm{A}\bm{x}_2\|_\infty\le \|\bm{A}\bm{x}_1-\bm{A}\bm{x}_2\|_2\le \frac 32 \|\bm{x}_1-\bm{x}_2\|_2\le \frac 32\operatorname{diam}(M).
\]
Now we consider $$\tilde \phi(\bm{y})=\phi\left(\frac 32\operatorname{diam}(M)\bm{y}+\bm{y}_0\right),$$ where $\bm{y}_0\in\R^n$ is chosen such that $\left\{\frac 32\operatorname{diam}(M)\bm{y}+\bm{y}_0,  \bm{y}\in [0,1]^n\right\}\subset\bm{\Omega}$. Then $\tilde \phi$ is Lipschitz continuous on $\R^n$ with Lipschitz constant $\operatorname{Lip}(\tilde \phi)=\frac 32\operatorname{diam}(M)\operatorname{Lip}(\phi)=3\operatorname{diam}(M)\operatorname{Lip}(f)$. By Theorem 1 in \cite{Hanin}, we have that for every $\va>0$, there exists a ReLU neural network $\phi_\va$ with at most $m=C_1 n n! (3 \operatorname{diam}(M)\operatorname{Lip}(f))^{n}\va^{-n}$ neurons such that
\[
\|\tilde \phi-\phi_\va\|_{L^\infty([0,1]^n)}\le \va,
\]
where $C_1>0$ is a universal constant.  In terms of $f$, we obtain
\[
\inf_{h\in F_m^n}\sup_{\bm{x}\in\mathcal{M}}|f(\bm{x})-h(\bm{A}\bm{x})|
\leq C \operatorname{Lip}(f) \operatorname{diam}(M) nm^{-\frac{1}{n}},
\]
where $C_2>0$ is an another universal constant. Now we can conclude the proof in the same way as in the proof of Theorem \ref{thm:appx}. 
\end{proof}

\begin{rem}
The volume of $\mathcal{M}$ and the geodesic covering regularity of $\mathcal{M}$ do not depend on the ambient space $\R^d$, and hence, do not depend on $d$.  In the definition of the condition number of $\mathcal{M}$ in $\R^d$, it appears to depend on $\R^d$. However, the dependence on $d$ is very weak. For example, by the Nash embedding theorem \cite{Nash}, $\mathcal{M}$ can be isometrically embedded in $\R^{k(3k+11)/2}$. If $d\ge \frac{k(3k+11)}{2}$, then the condition number of $\mathcal{M}$ in $\R^d$ is the same as the condition number of $\mathcal{M}$ in $\R^{k(3k+11)/2}$.
\end{rem}

\begin{rem}
There are examples shown in \cite{NK,SVK} that one pixel change will make deep neural networks misclassify natural images. Such changes
 induce a severe jump in the Lipschitz constant (which is defined with respect to the Euclidean distance as in Theorem \ref{thm:appx-manifold}). From the practical point of view, the natural distance of the  input data should perhaps be the Euclidean distance rather than the geodesic distance on the manifold. That is part of the reason why the Lipschitz continuity of $f$ in Theorem \ref{thm:appx-manifold} is stated in terms of the Euclidean distance $\|\cdot\|_2$.
\end{rem}

\section{Accelerate the Training of Neural Networks by Random Projections}\label{sec:compimp}
In this section, we use the random projections to accelerate the training of neural networks. We present in detail our implementation of random projections in both fully-connected and convolutional neural networks. We shall also provide estimates to demonstrate that our scheme indeed achieves significant reduction in computational complexity and number of parameters.

\subsection{Fully-connected Neural Networks (FCNN)}
The main idea in the proof of Theorems \ref{thm:appx} and \ref{thm:appx-manifold} is to use random projections to reduce the number of neurons. In practice, we will also use random projections to  reduce the number of neurons and hence the number of parameters of FCNN. Consequently, the training of FCNN is significantly accelerated. 

Here for simplicity, we assume that the input vectors are sparse under a suitable linear transformation, which is a common assumption for digital images and signals. It is similar when input signals are on a smooth manifold. The bulk of the computations of an FCNN comes from the matrix multiplication in each layer. Recall that a multi-layer FCNN produces outputs by
\begin{equation}\label{eq:fcnn}
	\bm{x}^{(l)} = \mathrm{ReLU}\left(\bm{W}^{(l)}\bm{x}^{(l-1)}\right), \quad\mbox{for}~l=1,\ldots,L,
\end{equation}
where $\bm{W}^{(l)}:\mathbb{R}^{d_{l-1}}\to\R^{d_{l-1}}$ is an affine transformation, with $d_{l-1}$ and $d_{l}$ respectively being the dimensions of the input $\bm{x}^{(l-1)}$ and output $\bm{x}^{(l)}$. The final output is $y=\bm{W}^{(L+1)}\bm{x}^{(L)}$ with $\bm{W}^{(L+1)}:\R^{d_L}\to\R$ an affine transformation. 

We modify \eqref{eq:fcnn} by random projection as in the following. At layer-$1$, the input vector $\bm{x}^{(0)}=\bm{x}\in\mathcal{S}_k$ is sparse. According to Theorem \ref{thm:GaussianRIP}, if we choose a suitable $n_0$, then with high probability a Gaussian random matrix $\bm{A}^{(1)}\in\mathbb{R}^{n_0\times d_0}$ embeds $\mathcal{S}_k$ to $\R^{n_0}$ nearly isometrically. Therefore, $\bm{A}^{(1)}$ reduce the dimension of $\mathcal{S}_k$ without too much information loss. We then apply an affine transformation $\bm{U}^{(1)}:\R^{n_0}\to\R^{d_1}$ on the embedded subspace $\R^{n_0}$. In other words, we replace $\bm{W}^{(1)}$ by $\bm{U}^{(1)}\circ\bm{A}^{(1)}$. In this way, there are only $d_1(n_0+1)$ parameters in layer-$1$, which is significantly smaller than $d_1(d_0+1)$ in \eqref{eq:fcnn}. Since ReLU set negative entries to $0$, the outputs of each layer are sparse vectors. The same parameter reduction procedure as in layer-$1$ is applied to each layer. In particular, at layer-$l$, we replace $\bm{W}^{(l)}$ by $\bm{U}^{(l)}\circ\bm{A}^{(l)}$, where $\bm{U}^{(l)} : \R^{n_{l-1}}\to\R^{d_l}$ is an affine transformation to be trained and $\bm{A}^{(l)}\in\mathbb{R}^{n_{l-1}\times d_{l-1}}$ is a given matrix drawn from random Gaussian distribution. The number of parameters at layer-$l$ is reduced from $d_l(d_{l-1}+1)$ in \eqref{eq:fcnn} to $d_l(n_{l-1}+1)$. Altogether, we propose the following FCNN
\begin{equation}\label{eq:fcnn2}
	\bm{x}^{(l)} = \mathrm{ReLU}\left(\bm{U}^{(l)}\circ\bm{A}^{(l)}\bm{x}^{(l-1)}\right), \quad\mbox{for}~l=1,\ldots,L,
\end{equation}
where $\bm{U}^{(l)}:\mathbb{R}^{n_{l-1}}\to\R^{d_{l}}$ is an affine transformation to be trained from the data, and $\bm{A}^{(l)}\in\R^{n_{l-1}\times d_{l-1}}$ is a fixed matrix with entries drawn from i.i.d. random Gaussian distribution. The final output is $y=\bm{W}^{(L+1)}\bm{x}^{(L)}$ with $\bm{W}^{(L+1)}:\R^{L}\to\R$ an affine transformation to be trained. 

Under this scheme, the number of parameters can be significantly reduced so long as $n_l$'s are kept small. In fact, it can be immediately calculated that the number of parameters of the original FCNN \eqref{eq:fcnn} is $(d_{L}+1)+\sum_{l=1}^{L}(d_{l-1}+1)d_{l}$, whereas that of the modified network \eqref{eq:fcnn2} is $(d_{L}+1)+\sum_{l=1}^{L}(n_{l-1}+1)d_{l}$. In practice, since it is usually the case that $d_1$ and $d_{l-1}$ are large and the input $\bm{x}^{(l)}$ is sparse, $n_{l-1}\ll d_{l-1}$ can be easily satisfied. Therefore, the reduction of number of parameters by \eqref{eq:fcnn2} is significant.

\subsection{Convolutional Neural Networks (CNN)}
Since convolution is linear in nature, it is possible to adapt our random projection scheme from FCNNs to CNNs. Again for simplicity, consider the $l$-th layer of a CNN with a ReLU activation:
\begin{equation}\label{eq:cnn}
\mathcal{X}_{\cdot\cdot j}^{(l)} = \mathrm{ReLU}\left(\mathcal{F}_{\cdot\cdot\cdot j}^{(l)} * \mathcal{X}^{(l-1)}\right),\quad 1\leq j \leq c_l.
\end{equation}
Here for $i=l-1$ or $l$, $\mathcal{X}^{(i)}\in\mathbb{R}^{m\times m\times c_i}$ is the output tensor at the $i$-th layer, with height and width $m$, and $c_i$ number of channels; $\mathcal{F}^{(l)}\in\mathbb{R}^{h\times h\times c_{l-1}\times c_{l}}$ is a trainable tensor consisting of $c_l$ filters of height and width $h$ and depth $c_{l-1}$. In addition, the ``$\cdot$'' notation in the subscripts means all entries in that axis are included, for instance $\mathcal{X}_{\cdot\cdot j}^{(l)}$ is simply the $m\times m$ matrix at the $j$-th channel of the tensor $\mathcal{X}^{(l)}$, and $\mathcal{F}_{\cdot\cdot\cdot j}^{(l)}$ is the $j$-th filter at the $l$-th layer. We shall describe two possible modifications of the CNN by random projections.

\subsubsection{Approach I: Direct Extension}
Upon realizing that convolution is essentially a matrix multiplication acting on different patches of the input tensor, \eqref{eq:cnn} can be rewritten more succinctly as
\begin{equation}\label{eq:matcnn}
\bm{X}^{(l)} = \mathrm{ReLU}\left(\bm{F}^{(l)}\tilde{\bm{X}}^{(l-1)}\right),
\end{equation}
where $\tilde{\bm{X}}^{(l-1)}\in\mathbb{R}^{c_{l-1}h^2\times m^2}$ is a matrix whose columns are vectorized $h\times h\times c_{l-1}$ patches of $\mathcal{X}^{(l-1)}$ to be convolved with the filters; $\bm{F}^{(l)}\in\mathbb{R}^{c_l\times c_{l-1}h^2}$ is a matrix wherein each row is given by $\bm{F}_{j\cdot}^{(l)}=\text{vec}({\mathcal{F}_{\cdot\cdot\cdot j}^{(l)}})^T$; $\bm{X}^{(l)}\in\mathbb{R}^{c_l \times m^2}$ is the matrix whose rows are vectorized channels of $\mathcal{X}^{(l)}$, namely $\bm{X}^{(l)}_{j\cdot} = \text{vec}(\mathcal{X}^{(l)}_{\cdot\cdot j})^T$. As argued in the previous section, columns of $\tilde{\bm{X}}^{(l-1)}$ are sparse. Therefore, following \eqref{eq:fcnn2}, instead of $\bm{F}^{(l)}$, we first do dimension reduction of the sparse vectors by a random Gaussian matrix $\bm{A}^{(l)}$, followed by a linear transformation $\bm{U}^{(l)}$ in the reduced subspace. We obtain
\begin{equation}\label{eq:rpcnn}
\bm{X}^{(l)} = \mathrm{ReLU}\left(\bm{U}^{(l)}\bm{A}^{(l)}\tilde{\bm{X}}^{(l-1)}\right),
\end{equation}
where, similar to the FCNN scheme, $\bm{U}^{(l)}\in\mathbb{R}^{c_l\times n_{l-1}}$ is trainable, and $\bm{A}^{(l)}\in\mathbb{R}^{n_{l-1}\times c_{l-1}h^2}$ is a fixed matrix whose entries are drawn from i.i.d. random Gaussian distribution. The effect of the larger variance of $\bm{A}^{(l)}$ can be nullified by batch normalization. Similar to the FCNN case, the number of parameters has been reduced because of the approximation $\bm{F}^{(l)}$ by $\bm{U}^{(l)}\bm{A}^{(l)}$ with a small $n_{l-1}$.

\subsubsection{Approach II: Per-channel Extension}
In the first approach, columns in $\bm{A}^{(l)}$ that correspond to one channel of the input tensor are independent from those corresponding to another. In other words, different channels of the input tensor are essentially assigned different ``$\bm{A}$''s. This means that the scheme does not take into account the correlations amongst the input channels, however in practice, for instance, the RGB channels of an input image are closely related. As a result, this might diminish the expressive power of the network. To remedy this, we can consider apply random projections separately for each channel, and then perform a summation over all the per-channel outputs. Let $\hat{\bm{X}}^{(l-1)}\in\mathbb{R}^{h^2\times m^2\times c_{l-1}}$ be the tensor such that the columns of $\hat{\bm{X}}_{\cdot\cdot j}^{(l-1)}$ are vectorized $h\times h$ patches from the $j$-th channel of $\mathcal{X}^{(l-1)}$. Then our modified scheme can be written as
\begin{equation}\label{eq:rpcnn2}
\bm{X}^{(l)} = \mathrm{ReLU}\left(\sum_{j=1}^{c_{l-1}} \hat{\bm{U}}_{\cdot\cdot j}^{(l)}\hat{\bm{A}}^{(l)}\hat{\bm{X}}_{\cdot\cdot j}^{(l-1)}\right),
\end{equation}
where $\hat{\bm{U}}^{(l)}\in\mathbb{R}^{c_{l}\times n_{l-1} \times c_{l-1}}$ is trainable, and $\hat{\bm{A}}^{(l)}\in\mathbb{R}^{n_{l-1}\times h^2}$ is a given matrix whose entries are randomly drawn from i.i.d. Gaussian distribution.

A small price for this approach is an increase in computational and model complexity compared to the first approach \eqref{eq:rpcnn}, but it still achieves substantial, albeit less dramatic, reduction over the original CNN \eqref{eq:cnn} as long
as $n_{l-1}$ is small.

\section{Experiments}\label{sec:exp}
In this section, we demonstrate that our schemes in Section \ref{sec:compimp} indeed achieve significant reduction in model and computational complexity while causing minimal loss in classification accuracy. 

\begin{table}[h]
	\centering
	\begin{tabular}{c|l|l}
		\hline
		Data & \multicolumn{1}{c|}{FCNN} & \multicolumn{1}{c}{CNN}\\ 
		\hline
		\multirow{4}{*}{MNIST} & INPUT $\to$ FC 1024, ReLU & INPUT $\to$ 5x5 CONV 64, BN, ReLU, 3x3 MP\\
		& \phantom{INPUT} $\to$ FC 1024, ReLU & \phantom{INPUT} $\to$ 5x5 CONV 128, BN, ReLU, 3x3 MP\\
		& \phantom{INPUT} $\to$ FC 10 & \phantom{INPUT} $\to$ FC 512, BN, ReLU \\
		& & \phantom{INPUT} $\to$ FC10 \\
		\hline
		\multirow{5}{*}{CIFAR-10} & INPUT $\to$ FC 4096, ReLU & INPUT$\to$ 5x5 CONV 128, BN, ReLU, 3x3 MP \\
		& \phantom{INPUT} $\to$ FC4096 ReLU & \phantom{INPUT} $\to$ 5x5 CONV 192, BN, ReLU, 3x3 MP \\
		& \phantom{INPUT} $\to$ FC10 & \phantom{INPUT} $\to$ 5x5 CONV 256, BN, ReLU, 3x3 MP\\
		& & \phantom{INPUT} $\to$ FC 512, BN, ReLU\\
		& & \phantom{INPUT} $\to$ FC 10\\
		\hline
	\end{tabular}
	\caption{The specifications of the models used in the experiment. ``INPUT'' refers to the input layer (no operation is performed here). ``FC $d$'' means fully-connected layer with $d$ hidden units. ``$h$x$h$ CONV $c$'' is a convolutional layer with filter size $h$ and $c$ output channels. ``$h$x$h$ MP'' is a max-pooling layer with filter size $h$. ``BN'' is a batch normalization layer. ``ReLU'' is the rectified linear unit.}
	\label{table:specs}
\end{table}

We used two well-known data sets, MNIST and CIFAR-10. MNIST is a collection of $28 \times 28$ images of handwritten digits from 0 to 9, in which there are 60,000 training examples 10,000 testing ones. CIFAR-10 consists of $32\times 32\times 3$ images, where 3 indicates the three RGB channels, and there are 50,000 training examples and 10,000 test ones in this data set. The experiments were conducted using the TensorFlow framework. The specifications of the models are detailed in Table \ref{table:specs}. We used stochastic gradient descent with 0.9 momentum for training FCNNs, and Adam optimization algorithm for CNNs. A suitable initial learning rate was chosen for each experiment and halved every 2,400 steps. All models were trained for 20 epochs and the results are summarized in Table \ref{table:results}. We see that with a sufficiently large $n_l=n$ our modified models use significantly small parameters and computational cost while achieving similar results. 

\begin{table}[!h]
	\centering
	\begin{tabular}{c|c|c|c|c|c|c}
		\hline
		Architecture & Data & \multicolumn{2}{c|}{Configuration} & Top-1 Error (\%) & FLOPs ($10^6$) & Parameters ($10^3$) \\ \hline
		\multirow{10}{*}{FCNN} & \multirow{5}{*}{MNIST} 
		& \multicolumn{2}{c|}{Original} & 1.44 & 3.70 & 1,851.39 \\ \cline{3-7}
		& &\multicolumn{2}{c|}{$n=250$} & 1.48 & 1.93 & 512.00 \\
		& &\multicolumn{2}{c|}{$n=150$} & 1.83 & 1.16 & 307.20 \\
		& &\multicolumn{2}{c|}{$n=100$} & 2.29 & 0.77 & 204.80 \\
		& &\multicolumn{2}{c|}{$n=50$} & 3.42 & 0.39 & 102.40 \\ \cline{2-7}
		& \multirow{5}{*}{CIFAR-10} 
		& \multicolumn{2}{c|}{Original} & 40.69 & 58.72 & 29,360.13 \\ \cline{3-7}
		& &\multicolumn{2}{c|}{$n=1500$} & 41.34 & 46.08 & 12,288.00 \\
		& &\multicolumn{2}{c|}{$n=1000$} & 41.80 & 30.72 & 8,192.00 \\
		& &\multicolumn{2}{c|}{$n=700$} & 42.89 & 21.50 & 5,734.40 \\
		& &\multicolumn{2}{c|}{$n=500$} & 43.05 & 15.36 & 4,096.00 \\ 
		\hline
		\multirow{20}{*}{CNN} & \multirow{9}{*}{MNIST}
		& \multicolumn{2}{c|}{Original} & 0.48 & 82.79 & 206.40 \\ \cline{3-7}
		& & \multirow{4}{*}{Approach I} & $n=15$ & 0.56 & 12.25 & 2.88 \\
		& & & $n=10$ & 0.66 & 8.17 & 1.92 \\
		& & & $n=5$ & 0.84 & 4.08 & 0.96 \\
		& & & $n=3$ & 1.05 & 2.45 & 0.58 \\ \cline{3-7}
		& & \multirow{4}{*}{Approach II} & $k=10$ & 0.50 & 39.78 & 82.56 \\
		& & & $n=7$ & 0.60 & 27.85 & 57.79 \\
		& & & $n=5$ & 0.59 & 19.89 & 41.28 \\
		& & & $n=3$ & 0.87 & 11.93 & 24.77 \\ \cline{2-7}
		& \multirow{9}{*}{CIFAR-10}
		& \multicolumn{2}{c|}{Original} & 15.15 & 491.52 & 1,852.80 \\ \cline{3-7}
		& & \multirow{4}{*}{Approach I} & $n=40$ & 21.41 & 111.98 & 23.04\\
		& & & $n=25$ & 22.20 & 69.99 & 14.40 \\
		& & & $n=15$ & 24.69 & 41.99 & 8.64 \\
		& & & $n=10$ & 27.13 & 28.00 & 5.76 \\ \cline{3-7}
		& & \multirow{4}{*}{Approach II} & $n=15$ & 15.62 & 331.01 & 1,111.68\\
		& & & $n=10$ & 17.67 & 220.67 & 741.12 \\
		& & & $n=7$ & 18.92 & 154.47 & 518.78 \\
		& & & $n=5$ & 20.49 & 110.34 & 370.56 \\ 
		\hline
	\end{tabular}
	\caption{The model errors and complexities under various configurations. ``Original'' means no RP scheme is applied. ``FLOPs'' counts the number of multiplications and additions in the model, and ``Parameters'' is the number of parameters. Note that ``FLOPs'' and ``Parameters'' do not take into account biases, BN, the last layer in FCNNs (which computes the predicted probabilities), or the FC layers in CNNs. We choose $n_l=n$ for all $n$.}
	\label{table:results}
\end{table}\par

\bibliographystyle{abbrv}
\bibliography{NNreference}


\bigskip

\bigskip

\noindent Jian-Feng Cai,~ Dong Li, ~Jiaze Sun, ~Ke Wang

\noindent Department of Mathematics, The Hong Kong University of Science and Technology\\
Clear Water Bay, Kowloon, Hong Kong\\

\noindent Email: \textsf{jfcai@ust.hk}, \textsf{madli@ust.hk}, \textsf{jsunau@connect.ust.hk}, \textsf{kewang@ust.hk}

\end{document}